\newtheorem{theorem}{Theorem}[section]
\newtheorem{lemma}{Lemma}[section]
\newtheorem{definition}{Definition}
\newacronym{VOSF}{VOSF}{\textit{Variational Orthogonal Signature Features}}
\newacronym{SigGPDE}{SigGPDE}{\textit{?}}
\providecommand{\customgenericname}{}
\newcommand{\newcustomtheorem}[2]{%
  \newenvironment{#1}[1]
  {%
   \renewcommand\customgenericname{#2}%
  \renewcommand\theinnercustomgeneric{##1}%
   \innercustomgeneric
  }
  {\endinnercustomgeneric}
}
\icmltitlerunning{SigGPDE: Scaling Sparse Gaussian Processes on Sequential Data}
\newcommand*{\addFileDependency}[1]{
  \typeout{(#1)}
  \@addtofilelist{#1}
  \IfFileExists{#1}{}{\typeout{No file #1.}}
}
\begin{document}

\twocolumn[

\icmltitle{SigGPDE: Scaling Sparse Gaussian Processes on Sequential Data}

\begin{icmlauthorlist}
\icmlauthor{Maud Lemercier}{war}
\icmlauthor{Cristopher Salvi}{ox}
\icmlauthor{Thomas Cass}{imp}
\icmlauthor{Edwin V. Bonilla}{csiro}
\icmlauthor{Theodoros Damoulas}{war}
\icmlauthor{Terry Lyons}{ox}

\end{icmlauthorlist}

\icmlaffiliation{war}{University of Warwick and Alan Turing Institute}
\icmlaffiliation{imp}{Imperial College London and Alan Turing Institute}
\icmlaffiliation{ox}{University of Oxford and Alan Turing Institute}
\icmlaffiliation{csiro}{CSIRO’s Data61 and The University of Sydney}

\icmlcorrespondingauthor{Maud Lemercier}{maud.lemercier@warwick.ac.uk}

\icmlkeywords{Machine Learning, ICML}

\vskip 0.3in
]



\printAffiliationsAndNotice{}  


\begin{abstract}
   Making predictions and quantifying their uncertainty when the input data is sequential is a fundamental learning challenge, recently attracting increasing attention. We develop SigGPDE, a new scalable sparse variational inference framework for Gaussian Processes (GPs) on sequential data. Our contribution is twofold. First, we construct inducing variables underpinning the sparse approximation so that the resulting evidence lower bound (ELBO) does not require any matrix inversion. Second, we show that the gradients of the GP signature kernel are solutions of a hyperbolic partial differential equation (PDE). This theoretical insight allows us to build an efficient back-propagation algorithm to optimize the ELBO. We showcase the significant computational gains of SigGPDE compared to existing methods, while achieving state-of-the-art performance for classification tasks on large datasets of up to $1$ million multivariate time series.
\end{abstract}

\section{Introduction}\label{sec:Introduction}

Gaussian process (GP) models provide a sound mathematical framework for supervised learning that allows the incorporation of prior assumptions and provides uncertainty estimates when modelling unknown functions \cite{rasmussen-williams-book}. This is usually achieved by specifying a GP prior over functions with a suitable covariance (or kernel) along with a conditional likelihood. With this, the problem boils down to that of estimating the posterior over the function (values) given the observed data. 

However, this posterior distribution is often analytically intractable and, even when the conditional likelihood is a Gaussian, GP models scale poorly on the number of observations $N$, with na\"{i}ve approaches having a time complexity $\mathcal{O}(N^3)$. From a wide range of  approximate techniques to scale inference in GP models to large datasets,  ``sparse'' methods based on variational inference (VI) have emerged as one of the dominant approaches \cite{titsias2009variational}. They consist in defining a family of approximate posteriors through $M$ \textit{inducing variables}, and selecting the distribution in this family that minimizes the Kullback-Leibler (KL) divergence between the approximation and the true posterior. This is achieved by maximizing the so-called evidence lower bound (ELBO). When the likelihood factorizes over datapoints, training can be done in minibatches of size $\tilde{N}$ resulting in a per-iteration computational cost $\mathcal{O}(\tilde{N}M^2+M^3)$, where the $\mathcal{O}(M^3)$ cost is due to the inversion of the covariance matrix of the $M$ inducing variables. This yields significant computational savings when $M\ll N$.

In the seminal work of \citet{titsias2009variational} the inducing variables correspond to evaluations of the GP at $M$ pseudo input locations, which typically results in a dense covariance matrix to invert. Subsequently, other ways of constructing inducing variables have been introduced in order to mitigate the $\mathcal{O}(M^3)$ cost \cite{hensman2017variational,burt2020variational}. The core idea consists in defining (almost) independent inducing variables, such that their covariance matrix is (almost) diagonal. These inducing variables correspond to projections of the GP on basis functions, such that the covariance matrix is a Gramian matrix with respect to some inner-product. Orthogonal basis functions yield diagonal Gramian matrices, hence these methods are often referred to as \textit{variational orthogonal features} (VOFs) . 
However existing VOF methods are limited to stationary kernels on   $\mathcal{X}\subset\mathbb{R}^d$ ($d \in \mathbb{N}$). 

In this work we are interested in generalizing the VOF paradigm to the case where the input space $\mathcal{X}$ is a set of \textit{sequences} of vectors in $\mathbb{R}^d$. One may be tempted to naively concatenate each vector in a sequence of length $\ell$ to form a flat vector in $\mathbb{R}^{\ell d}$. However in this case existing VOF methods cannot be directly applied because they are limited to low dimensional vectors, with $d \leq 8$ \cite{dutordoir2020sparse}. Thus, one needs kernel functions specifically designed for sequential data. The \textit{signature kernel} \cite{cass2020computing} is a natural choice that has recently emerged as a leading machine learning tool for learning on sequential data. In particular, \citet{toth2020bayesian} have proposed GPSig, a GP inference framework leveraging an approximation of this covariance function \cite{kiraly2019kernels} and achieving state-of-the-art performance on time series classification tasks. 
Nevertheless, as in standard sparse variational approaches to GPs, the inducing inputs they chose (so called \emph{inducing tensors}) are additional variational parameters to optimize, and the resulting covariance matrix is dense. 

Here we develop SigGPDE, a new scalable sparse variational inference framework for GP models on sequential data. After a brief recap on the general principles of variational inference (\Cref{sec:Background}) we identify a set of VOFs \emph{naturally} associated with the signature kernel. These inducing variables do not depend on any variational parameter as they are defined as projections of GP-samples onto an orthogonal basis for the RKHS associated to the signature kernel (\Cref{sec:VOS}). As a result, unlike the methods developed in \citet{toth2020bayesian}, in SigGPDE the optimization of the ELBO \textit{does not require any matrix inversion}. Subsequently, we show that the gradients of the signature kernel are solutions of a \emph{hyperbolic partial differential equation} (PDE). This theoretical insight allows us to build an efficient back-propagation algorithm to optimize the ELBO (\Cref{sec:backprop_PDE}). Our experimental evaluation shows that SigGPDE is considerably faster than GPSig, whilst retaining similar predictive performances on datasets of up to $1$ million multivariate time series (\Cref{sec:experiments}).

\section{Background}\label{sec:Background}

We begin with a general summary of variational inference for GPs. In this section, it is assumed that the input space is $\mathcal{X} \subset \mathbb{R}^d$. Standard models with zero-mean GP priors and iid conditional likelihoods can be written as follows
\begin{equation}
    f \sim \mathcal{GP}(0,k(\cdot,\cdot))
     \quad p(\mathbf{y}|\mathbf{f}) = \prod_{i=1}^{N}p(y_i|f(x_i)),
      \label{eq:GP}
\end{equation}
where $k(\cdot,\cdot)$ is the covariance function. 
The general setting for sparse GPs consists in specifying a collection of $M$ variables as well as a joint distribution with variational parameters $\mathbf{m}$ (mean vector) and $\Sigma$ (covariance matrix)
\begin{align}
    \mathbf{u}=\{u_m\}_{m=1}^{M}, \quad q(\mathbf{u})=\mathcal{N}(\mathbf{m},\Sigma).
\end{align} 
These variables induce a family of approximate posteriors that are GPs with finite dimensional marginal densities of the form $q(\mathbf{f},\mathbf{u})= p(\mathbf{f}|\mathbf{u})q(\mathbf{u})$. Considering any input $x \in \mathcal{X}$, the mean and covariance functions of these GPs are
\begin{align}
    \mu_q(x) &= C_{f_x\mathbf{u}}C_{\mathbf{u}\mathbf{u}}^{-1}\mathbf{m}\\ 
    k_q(x,y) &= k(x,y)-C_{f_x \mathbf{u}}C_{\mathbf{u}\mathbf{u}}^{-1}(C_{\mathbf{u}\mathbf{u}}-\Sigma)C_{\mathbf{u}\mathbf{u}}^{-1}C_{\mathbf{u}f_{y}},\notag
\end{align}
where the vector $C_{f_x\mathbf{u}}$ and the matrix $C_{\mathbf{u}\mathbf{u}}$ are defined as 
\begin{equation}
[C_{f_x\mathbf{u}}]_m = \mathbb{E}[u_mf(x)], \ \quad [C_{\mathbf{u}\mathbf{u}}]_{m,m'} = \mathbb{E}[u_mu_{m'}]   
\end{equation}
Provided the inducing variables $\mathbf{u}$ are deterministic conditioned on $f$, one has the following lower bound (ELBO) on the marginal log likelihood \cite{matthews2017scalable}
\begin{align}\label{eq:ELBO}
    \hspace{-0.05cm}\log p(\mathbf{y}) \geq \mathbb{E}_{q(\mathbf{f})}[\log p(\mathbf{y}|\mathbf{f})] -KL[q(\mathbf{u})||p(\mathbf{u})],
\end{align}%
where $p(\mathbf{u})=\mathcal{N}(0_M,C_{\mathbf{u}\mathbf{u}})$. Maximizing the right-hand-side of \cref{eq:ELBO} is equivalent to minimizing the KL divergence between $q(f)$ and the true posterior distribution. 

The original variational inference framework outlined in \citet{titsias2009variational} consists in setting $u_m=f(z_m)$ where $z_m \in \mathcal{X}$ is a pseudo input living in the same space as $x$ that may either be fixed or optimized. The per-iteration cost of optimizing the ELBO is $\mathcal{O}(\tilde{N}M^2+M^3)$, where $\tilde{N}$ is the minibatch size and $M^3$ is the cost of computing $C_{\mathbf{uu}}^{-1}$ via a Cholesky decomposition.

Recently, a considerable effort has been devoted to the construction of inducing variables $\mathbf{u}$ which yield a structured covariance matrix  $C_{\mathbf{u}\mathbf{u}}$ whose inversion has a reduced computational complexity \citep{hensman2017variational}. This line of work is often referred to as \textit{inter-domain} sparse GPs, owing to the fact that the pseudo inputs are not constrained to live in $\mathcal{X}$ as before. In particular, \citet{burt2020variational,dutordoir2020sparse} have shown that provided one can find an orthogonal basis of functions for the RKHS associated with the kernel $k(\cdot,\cdot)$, it is possible to define the inducing variables as projections of the GP samples onto this basis. This construction yields a diagonal covariance matrix $C_{\mathbf{u}\mathbf{u}}$.

\section{Variational Inference with Orthogonal Signature Features} \label{sec:VOS}

Here we present our first contribution, namely the use of \emph{orthogonal signature features} as inducing variables for GPs on sequential data. We begin with a summary of the theoretical background needed to define GPs endowed with the signature kernel. In this section $\mathcal{X}$ is no longer a subspace of $\mathbb{R}^d$ but will be defined as a space of \emph{paths} hereafter.

\subsection{The signature}\label{ssec:sig-features}

Consider a time series $\mathbf{x}$ as a collection of points $x_i \in \mathbb{R}^{d-1}$ with corresponding time-stamps $t_i \in \mathbb{R}$ such that 
\begin{equation}
    \mathbf{x} = ((t_0,x_0), (t_1, x_1), ..., (t_n, x_n))
\end{equation}
with $0=t_0 < ... < t_n=T$. Let $X : [0,T] \to \mathbb{R}^d$ be the piecewise linear interpolation of the data such that $X_{t_i} = (t_i, x_i)$. We denote by $\mathcal{X}$ the set of all continuous piecewise linear paths defined over the time interval $[0,T]$ and with values on $\mathbb{R}^d$. 

For any path $X \in \mathcal{X}$ and any $\alpha \in \{1,\ldots,d\}$, we will denote its $\alpha^{th}$ \emph{channel} by $X^{(\alpha)}$ so that at any time $t \in [0,T]$
\begin{equation}
    X_t = (X^{(1)}_t, \ldots, X^{(d)}_t),
\end{equation}
as depicted on \Cref{fig:path} with $d=3$.

The \textit{signature} $S:\mathcal{X} \to H$ is a \emph{feature map} defined for any path $X \in \mathcal{X}$ as the following infinite collection of statistics
\begin{align*}
    S(X) = \Big(1, &\left\{S(X)^{(\alpha_1)}\right\}_{\alpha_1=1}^d,\\
    &\left\{S(X)^{(\alpha_1,\alpha_2)}\right\}_{\alpha_1,\alpha_2=1}^d, \\
    &\left\{S(X)^{(\alpha_1,\alpha_2,\alpha_3)}\right\}_{\alpha_1,\alpha_2,\alpha_3=1}^d,\ldots \Big)
\end{align*}
where each term is a scalar equal to the iterated integral 
\begin{equation}\label{eqn:sig}
S(X)^{(\alpha_1,\ldots,\alpha_j)} =  \underset{0<s_1<\ldots<s_j<T}{\int \ldots \int} d X^{(\alpha_1)}_{s_1} \ldots dX^{(\alpha_j)}_{s_j}
\end{equation}
The \emph{feature space} $H$ associated to the signature is a Hilbert space defined as the direct sum of tensor powers of $\mathbb{R}^d$
\begin{equation}
    H = \bigoplus_{k=0}^\infty (\mathbb{R}^{d})^{\otimes k} = \mathbb{R} \oplus \mathbb{R}^{d}\oplus (\mathbb{R}^{d})^{\otimes 2} \oplus \ldots 
\end{equation}
where $\otimes$ denotes the outer product \cite{lyons1998differential, lyons2014rough}.

\paragraph{Interpretability of the signature features} 
An important aspect of sequential data is that the order of the observations is not commutative, in the sense that reordering the elements of a sequence can completely change its meaning. By definition the terms in the signature capture this fact. In effect, the $j$-fold iterated integral in \cref{eqn:sig} is defined as an integral over the simplex $0<s_1<\ldots<s_j<T$ which explicitly encodes the ordering of events happening across different channels $X^{(\alpha_1)},\ldots,X^{(\alpha_j)}$. 
This provides the signature features with a natural interpretability as highlighted several times in prior work \citep{arribas2018signature,moore2019using,lemercier2020distribution}.

\subsection{The signature kernel}

The \emph{signature kernel} $k:\mathcal{X}\times\mathcal{X} \to \mathbb{R}$ is a reproducing kernel associated to the signature feature map and defined for any pair of paths $X,Y \in \mathcal{X}$ as the following inner product
\begin{align}\label{eq:kernel}
    k(X,Y) = \left\langle S(X), S(Y) \right\rangle_H. 
\end{align} 
From the structure of $H$ and the properties of the signature it turns out that the signature kernel can be decomposed according to the expansion  \cite{cass2020computing}
\begin{align}\label{eq:k}
    k(X,Y)=\sum_{j=0}^\infty \sum_{|\boldsymbol\alpha|=j}S(X)^{\boldsymbol\alpha}S(Y)^{\boldsymbol \alpha},
\end{align}
where the inner summation is over the set of multi-indices
\begin{align}
    \left\{\boldsymbol \alpha = (\alpha_1,\ldots,\alpha_j) : \alpha_1,\ldots,\alpha_j \in \{1,\ldots,d\}\right\}
\end{align}
In their recent article, \citet{cass2020computing} provide a \emph{kernel trick} for the signature kernel by proving the relation
\begin{equation}\label{eq:pde_kernel}
    k(X,Y)=U(T,T)
\end{equation}
where the function of two variables $U : [0,T] \times [0,T] \to \mathbb{R}$ is the solution of the following hyperbolic PDE
\begin{align}\label{eq:sig_PDE}
    \frac{\partial^2 U}{\partial s \partial t} = (\dot X_s^T\dot Y_t) U
\end{align}
with boundary conditions $U(0,\cdot)=1$ and $U(\cdot,0)=1$.
This kernel trick is explained with simple arguments in the proof of \citet[Thm. 2.5]{cass2020computing}. The sketch of the proof goes as follows: one first shows that the inner-product in \cref{eq:kernel} satisfies a double integral equation which comes from the fact that
the signature itself solves an integral equation. Then one uses the fundamental theorem of calculus to differentiate with respect to the two time variables to obtain the PDE.

\begin{figure*}
    \centering
    \hspace{-1cm}
    \begin{subfigure}[b]{0.3\textwidth}
        \centering
        \includegraphics[trim={2.6cm 0.5cm 0.5cm 0},clip,scale=0.3]{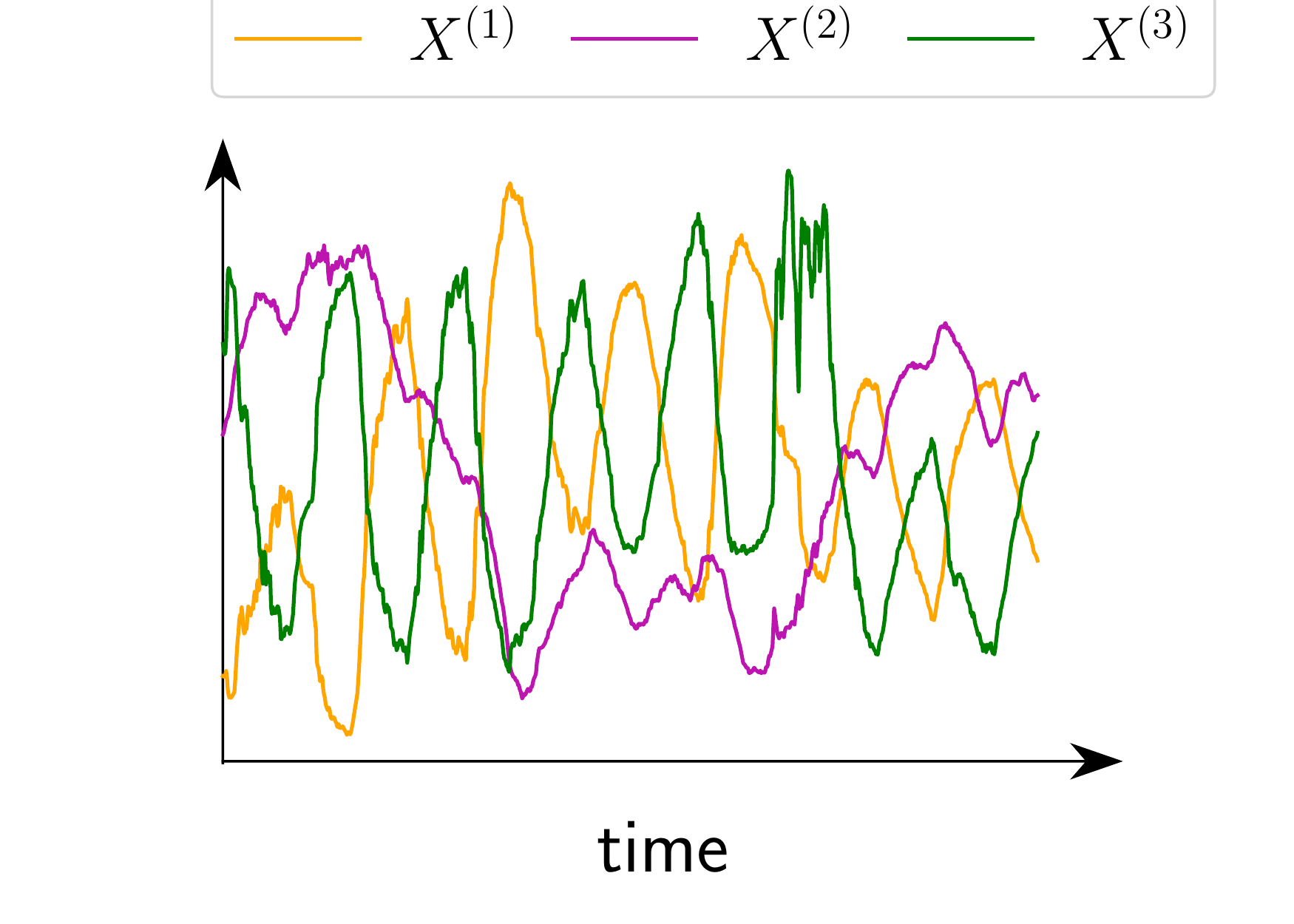}
        \caption{}
        \label{fig:path}
    \end{subfigure}%
    \hspace{-0.3cm}
    \begin{subfigure}[b]{0.7\textwidth}
        \centering
        \includegraphics[trim={6.2cm 15.075cm 5cm 1cm},clip,scale=0.9]{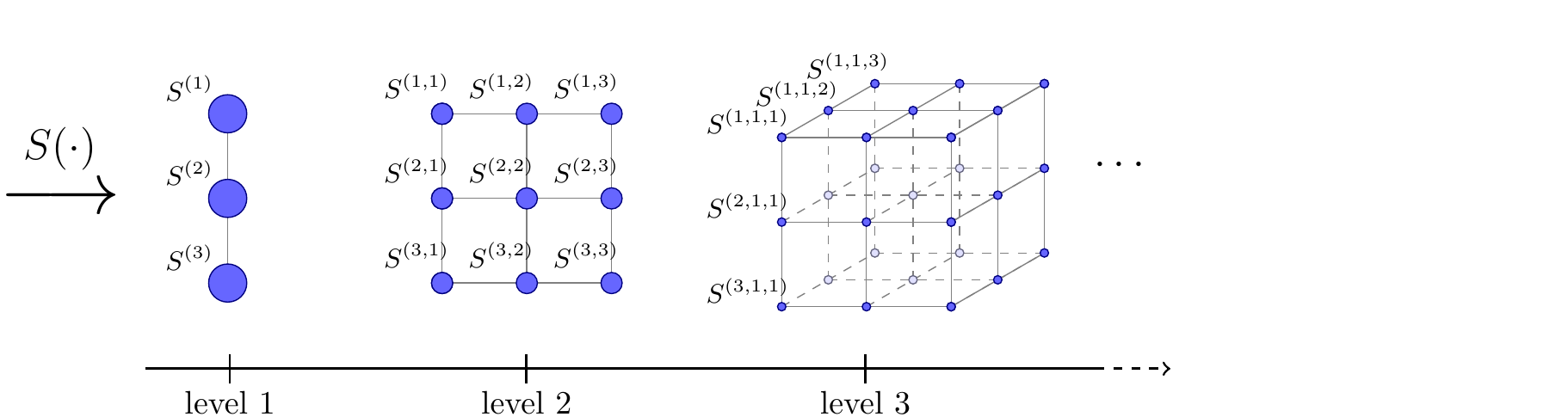}
        \caption{}
    \end{subfigure}
    \caption{Illustration of the first terms of the signature $S(X)$ for a 3-dimensional path $X$. Each blue circle corresponds to a signature feature $S(X)^{\boldsymbol \alpha}$ with $\boldsymbol \alpha=(\alpha_1,\ldots,\alpha_j)$. The size of the circle reflects the feature importance according to the property $|S(X)^{\boldsymbol \alpha})|=\mathcal{O}(1/|\boldsymbol \alpha|!)$. The first feature $S^{(0)}$ which is always equal to $1$ is omitted in this schematic.}
     \label{fig:factorial}
\end{figure*}

Next, we propose a simple parametrization of this kernel.

\subsection{Parametrization of the signature kernel}\label{ssec:parametrization}

In many real-world problems the
input path $X$ contains a large number $d$ of different channels, only some of which are relevant. For any coordinate $\alpha \in \{1,\ldots,d\}$ and time index $t \in [0,T]$ one can rescale each channel $X^{(\alpha)}$ by a scalar hyperparameter $\theta_\alpha$ yielding the rescaled path
\begin{align}\label{eq:rescale_path}
    X^{\boldsymbol{\theta}}_t := (\theta_1 X^{(1)}_t,\ldots,\theta_d X^{(d)}_t).
\end{align}
From \cref{eqn:sig} it is straightforward to see that the corresponding rescaled signature satisfies the following relation
\begin{align}
S_{\boldsymbol{\theta}}(X)^{(\alpha_1,\ldots,\alpha_j)} &:= S(X^{\boldsymbol{\theta}})^{(\alpha_1,\ldots,\alpha_j)}\\
&= \theta_{\alpha_1}\ldots \theta_{\alpha_j}S(X)^{(\alpha_1,\ldots,\alpha_j)}
\end{align}
for any $\alpha_1,\ldots,\alpha_j \in \{1,\ldots,d\}$. As a result, akin to an \textit{automatic relevance determination} (ARD) parametrization, the signature kernel of \cref{eq:k} can be reparametrized as
\begin{align}\label{eq:k_param1}
    k_{\boldsymbol{\theta}}(X,Y)=\sum_{j=0}^\infty \sum_{|\boldsymbol \alpha|=j}S_{\boldsymbol{\theta}}(X)^{\boldsymbol \alpha}S_{\boldsymbol{\theta}}(Y)^{\boldsymbol\alpha}.
\end{align}

\subsection{Variational Orthogonal Signature Features}
In the sequel we build on the results from the previous sections to define the orthogonal signature features underlying our sparse variational inference framework for GPs on sequential data. 

By \citet[Thm. 4.21]{steinwart2008support}, the \emph{reproducing kernel Hilbert space} (RKHS) $\mathcal{H}$ associated to the parametrized signature kernel $k_{\boldsymbol{\theta}}$ can be defined as
\begin{align}
    \mathcal{H} = \left\{g: X \mapsto \left\langle S_{\boldsymbol{\theta}}(X), \mathbf h \right\rangle_{H} \right\}, \quad \mathbf{h} \in H.
\end{align}

Besides, for any two functions $g_1,g_2 \in \mathcal{H}$ such that, 
\begin{align}
    g_1 &: X \mapsto \langle S_{\boldsymbol{\theta}}(X),\mathbf{h}_1 \rangle_{H}\\
    g_2 &: X \mapsto \langle S_{\boldsymbol{\theta}}(X),\mathbf{h}_2 \rangle_{H},
\end{align}
the inner product $\langle \cdot, \cdot \rangle_H$ induces the inner product on $\mathcal{H}$ 
\begin{equation}\label{eqn:f_RKHS}
    \langle g_1,g_2 \rangle_{\mathcal{H}} = \langle \mathbf{h}_1,\mathbf{h}_2\rangle_{H}.
\end{equation}
This result relies on the unicity of $\mathbf{h}_1$ and $\mathbf{h}_2$ in the decomposition of $g_1$ and $g_2$, which follows from \citet[Lemma 3.4]{diehl2019invariants} and \citet[Lemma 5]{xu2007refinable}.\footnote{By \citet[Lemma 3.4]{diehl2019invariants} $\overline{\mathrm{span}}S(\mathcal{X})=H$ where $S(\mathcal{X}):=\{S(X):~X\in\mathcal{X}\}$. Therefore by \citet[Lemma 5.]{xu2007refinable} $\forall g\in\mathcal{H}, ~\exists !\mathbf{h}\in H:g(\cdot)=\langle S(\cdot),\mathbf{h}\rangle_{H}$.} 

The key to our setup is that the set of signature features 
\begin{equation}
    \mathcal{S}^\perp = \left\{S_{\boldsymbol \theta}(\cdot)^{\boldsymbol \alpha} : X \mapsto S_{\boldsymbol\theta}(X)^{\boldsymbol \alpha}\right\}_{\boldsymbol\alpha = (\alpha_1,\ldots,\alpha_j)} 
\end{equation}
forms an orthonormal basis for the RKHS $\mathcal{H}$, i.e.
\begin{equation}\label{eq:orthogonality}
    \left\langle S_{\boldsymbol\theta}(\cdot)^{\boldsymbol \alpha}, S_{\boldsymbol\theta}(\cdot)^{\boldsymbol \alpha'} \right\rangle_{\mathcal{H}} = \delta_{\boldsymbol \alpha,\boldsymbol \alpha'} = \begin{cases}
            1, &         \text{if } \boldsymbol \alpha =\boldsymbol \alpha' ,\\
            0, &         \text{otherwise}.
    \end{cases}
\end{equation}
An important property of the orthonormal basis $\mathcal{S}^\perp$ is that its elements are naturally ordered. This ordering is due to the property that for any path $X\in\mathcal{X}$ the terms of the signature decay factorially \cite{lyons2007differential}
\begin{equation}\label{eq:factorial_decay}
    |S(X)^{\boldsymbol \alpha}|=\mathcal{O}\left(\frac{1}{|\boldsymbol\alpha|!}\right),
\end{equation}
as shown in \Cref{fig:factorial}. Hence to index the signature orthogonal features $S_{\boldsymbol \theta}^1, S_{\boldsymbol \theta}^2,\ldots,S_{\boldsymbol \theta}^m,\ldots$ we order first by increasing level $j$, and then by sorting the multi-indices $\boldsymbol{\alpha}$ within a level. From \cref{eq:orthogonality,eq:factorial_decay} we define our inducing variables as orthogonal projections\footnote{Although $f$ does not belong to $\mathcal{H}$ with probability $1$ \citep{kanagawa2018gaussian}, such projections are well defined because the space spanned by $\mathcal{S}^{\perp}$
is dense in the space of continuous functions on $\mathcal{X}$ and $f$ is continuous \citep[Thm. 1.]{toth2020bayesian}} of the GP onto the first $M$ elements of the orthonormal signature basis $\mathcal{S}^\perp$, that is
\begin{align}\label{eq:inducingvarproj}
    u_m=\langle f,S_{\boldsymbol{\theta}}^m\rangle_{\mathcal{H}}, \quad 1 \leq m \leq M.
\end{align}
With this choice of inducing variables we easily deduce the following covariances \cite{hensman2017variational}
\begin{align}
    \mathbb{E}[u_mf(X)]=S_{\boldsymbol{\theta}}^{m}(X) \mbox{~~~and~~~} \mathbb{E}[u_mu_{m'}]=\delta_{m,m'},
\end{align}
which implies that the covariance matrix $C_{\mathbf{uu}}$ is the identity. 

For any path $X \in \mathcal{X}$ we use the convenient vector notation
\begin{equation}
    \mathcal{S}_M(X) := [S_{\boldsymbol{\theta}}^1(X), \ldots, S_{\boldsymbol{\theta}}^M(X)] \in \mathbb{R}^M,
\end{equation}
to obtain the approximate posterior $\mathcal{GP}(\mu,\nu)$ with mean and covariance functions defined by the following equations
\begin{align}\label{eq:approximate_posterior}
    \mu(X)&=\mathcal{S}_M(X)^T\mathbf{m}\\
    \nu(X,Y)&=k_{\boldsymbol{\theta}}(X,Y)-\mathcal{S}_M(X)^T(I_{M}-\Sigma)\mathcal{S}_M(Y).\notag
\end{align}
We note that the signature and the signature kernel can be easily computed on real time series using existing python libraries \cite{esig, reizenstein2018iisignature}.

\section{Reverse-mode automatic differentiation for the signature kernel}\label{sec:backprop_PDE}

In order to optimize the ELBO with respect to the parameters $\boldsymbol{\theta}$  one needs to take derivatives of the signature kernel $k_{\boldsymbol{\theta}}$ of \cref{eq:approximate_posterior} with respect to each of its input paths.  Given that $k_{\boldsymbol{\theta}}$ solves the PDE (\ref{eq:sig_PDE}) it can be computed using appropriate PDE numerical solvers. Therefore, in theory the differentiation could be carried out by leveraging the automatic differentiation tools of modern deep learning libraries (Tensorflow, PyTorch etc.). However, backpropagating through the operations of the PDE solver can be inefficient. 

Here we show that the gradients of $k_{\boldsymbol{\theta}}$ can be computed efficiently \textit{without backpropagating through the operations of the PDE solver} as they are the solutions of a second PDE analogous to \cref{eq:sig_PDE}. The ability not to rely on automatic differentation allows for an efficient fitting of SigGPDE both in the terms of time complexity and memory cost. 

\begin{algorithm*}
\caption{Backpropagation for $k_{\boldsymbol{\theta}}(X,X)$ via PDE (\ref{eqn:deriv_PDE})}
\label{alg:PDE1}
\begin{algorithmic}[1]
\STATE {\bfseries Input:} Path $X$, localised impulses $\boldsymbol{\gamma} = \{\gamma_{i,j}\}$ fully determined by the time series $\mathbf x$.
\STATE\hspace{10pt}$\mathbf{u}_{0,:}=[1,0,\ldots,0], \ \ \mathbf{u}_{:,0}=[1,0,\ldots,0]$  \hfill\textit{// Boundary conditions for the augmented state}
\STATE{\hspace{10pt}\bfseries def} aug\_dynamics$\left(\left[U(s,t),U_{\boldsymbol\gamma}(s,t)\right],s,t\right)$: \hfill\textit{// Dynamics for the augmented state}
\STATE{\hspace{25pt}\bfseries return $\left[\dot{X}_s^T\dot{X}_tU(s,t),\dot{X}_s^T\dot{X}_tU_{\boldsymbol\gamma}(s,t)+\dot{\boldsymbol\gamma}_s\dot{X}_tU(s,t)\right]$}
\STATE\hspace{10pt}$\left[U(T,T),U_{\boldsymbol\gamma}(T,T)\right]=\mathrm{PDESolve}(\mathbf{u}_{0,:},\mathbf{u}_{:,0},\mathrm{aug\_dynamic},T,T)$

\STATE{\bfseries Output:} $2\cdot U_{\boldsymbol\gamma}(T,T)$ \hfill\textit{// Gradients of the kernel at the knots of $X$}
\end{algorithmic}
\end{algorithm*}
\subsection{Differentiating the signature kernel along the direction of a path}

Consider a time series $\mathbf{x}$ as a collection of points $x_i \in \mathbb{R}^d$ with corresponding time-stamps $s_i \in \mathbb{R}$ such that 
\begin{equation}
    \mathbf{x} = ((s_0, x_0), (s_1, x_1), ..., (s_\ell, x_\ell))
\end{equation}
with $s_0 < ... < s_\ell$. Every vector $x_i$ in the sequence can be written with respect to the canonical basis of $\mathbb{R}^d$ as
\begin{equation}
    x_i = \sum_{j=1}^d x_{i,j}\mathbf e_j
\end{equation}

Let $X: [0,T] \to \mathbb{R}^d$ be the piecewise linear interpolation of the data such that $X_{t_i} = (t_i, x_i)$. Similarly for a second time series $\mathbf y$ and resulting piecewise linear interpolation $Y$. Recall the definition of signature kernel as
\begin{align}
    k_{\boldsymbol{\theta}}(X,Y)=k(X^{\boldsymbol{\theta}},Y^{\boldsymbol{\theta}}),
\end{align}
where $X^{\boldsymbol{\theta}}$ and $Y^{\boldsymbol{\theta}}$ are the rescaled paths of \cref{eq:rescale_path}. 

By the chain rule one has that
\begin{align}
    \frac{\partial k_{\boldsymbol{\theta}}}{\partial\boldsymbol{\theta}}=\frac{\partial k}{\partial X^{\boldsymbol{\theta}}}\frac{\partial X^{\boldsymbol{\theta}}}{\partial \boldsymbol{\theta}}+\frac{\partial k}{\partial X^{\boldsymbol{\theta}}}\frac{\partial Y^{\boldsymbol{\theta}}}{\partial\boldsymbol{\theta}}
\end{align}

Hence, to formulate a backpropagation algorithm in a rigorous way compatible with the TensorFlow library used in this work, we need to give meaning to the following gradients
\begin{equation}\label{eqn:meaning}
    \left\{\frac{\partial}{\partial x_{i,j}}k(X,Y)\right\}_{i,j=1}^{\ell,d}
\end{equation}
The technical difficulty here consists in reconciling the continuous nature of the input path $X$ and the discrete nature of the locations $x_{i,j}$ where one wants to compute the gradients and given by the knots of the time series $\mathbf x$.

Next we introduce a collection of \textit{localised impulses} and define the concept of \emph{directional derivative of the signature kernel along a path} in order to make sense of the gradients in \cref{eqn:meaning}. These definitions will be followed by the main result of this section, namely that the directional derivative of $k$ solves another PDE similar to \cref{eq:sig_PDE} for the signature kernel, for which we derive an explicit solution via the \emph{technique of variation of parameters} (\Cref{thm:kernelvarparams}).

\begin{definition}\label{def:loc_imp}
For any $i=1,\ldots, \ell$ and any $j=1, \ldots, d$ define the localised impulse $\gamma_{i,j} : [0,T] \to \mathbb{R}^d$ as the solution of the following ordinary differential equation (ODE)
\begin{equation}
    \dot \gamma_{i,j} = \frac{1}{\ell} e_j \mathds{1}_{\{t\in[(i-1)/\ell, i/\ell)]\}}, \quad \gamma_{i,j}(0)=0
\end{equation}
\end{definition}

\begin{definition}
For any path $\gamma \in \mathcal{X}$ the directional derivative of the signature kernel $k$ along $\gamma$ is defined as 
\begin{align}\label{eqn:dir_derpaper}
    k_\gamma(X,Y) := \frac{\partial}{\partial \epsilon} k\Big(X+\epsilon \gamma, Y\Big)\Big|_{\epsilon = 0}
\end{align}
\end{definition}

Each gradient of the signature kernel $k_{\boldsymbol \theta}$ at the knot $x_{i,j}$ reported in \cref{eqn:meaning} can be identified with the directional derivative of $k_{\boldsymbol \theta}$ along the localised impulse $\gamma_{i,j}$ of \Cref{def:loc_imp}
\begin{equation}
    \frac{\partial}{\partial x_{i,j}}k(X,Y) :=  k_{\gamma_{i,j}}(X,Y)
\end{equation}

\subsection{A PDE for the gradients of the signature kernel}
\begin{algorithm*}
\caption{Backpropagation for $k_{\boldsymbol{\theta}}(X,X)$ via variation of parameters (\Cref{thm:kernelvarparams})}
\label{alg:PDE2}
\begin{algorithmic}[1]
\STATE {\bfseries Input:} Path $X$, localised impulses $\boldsymbol\gamma = \{\gamma_{i,j}\}$ fully determined by the time series $\mathbf x$
\STATE \hspace{10pt}$\mathbf{u}_{0,:}=[1,\ldots,1], \ \ \mathbf{u}_{:,0}=[1,\ldots,1]$ \hfill\textit{// Boundary conditions for the augmented state}
\STATE{\hspace{10pt}\bfseries def} aug\_dynamics$\left(\left[U(s,t),\tilde{U}(s,t)\right],s,t\right)$: \hfill\textit{// Dynamics for the augmented state}
\STATE{\hspace{25pt}\bfseries return $\left[\dot{X}_s^T\dot{X}_tU(s,t),\dot{X}_{T-s}^T\dot{X}_{T-t}\tilde{U}(s,t)\right]$}
\STATE\hspace{10pt}$[U,\tilde{U}]=\mathrm{PDESolve}(\mathbf{u}_{0,:},\mathbf{u}_{:,0},\mathrm{aug\_dynamic},T,T)$ \hfill\textit{// Keep the solutions at each $(s,t)$}
\STATE\hspace{10pt}$U_{\boldsymbol{\gamma}}=$ tf.sum($U \cdot \tilde{U} \cdot \boldsymbol\gamma X$)  \hfill\textit{// Simple final TensorFlow operations}
\STATE{\bfseries Output:}  $2\cdot U_{\boldsymbol\gamma}$ \hfill\textit{// Gradients of the kernel at the knots of $X$}
\end{algorithmic}
\end{algorithm*}

Recall that the signature kernel $k_\theta$ solves the following PDE
\begin{align}\label{eq:sig_PDE_}
    \frac{\partial^2 U}{\partial s \partial t} = (\dot X_s^T\dot Y_t) U
\end{align}
Integrating both sides with respect to $s$ and $t$ one obtains
\begin{equation}\label{eqn:sig_integral}
    U(s,t) = 1 + \int_{u=0}^s\int_{v=0}^t U(u,v) (\dot{X}_u^T\dot{Y}_v)dudv
\end{equation}
Let's denote by $U_\gamma : [0,T] \times [0,T] \to \mathbb{R}$ the directional derivative $k_\gamma$ evaluated at the restricted paths $X|_{[0,s]},Y|_{[0,t]}$
\begin{equation}\label{eq:U_der}
    U_\gamma(s,t) := k_\gamma(X|_{[0,s]}, Y|_{[0,t]})
\end{equation}
The combination of  \cref{eqn:sig_integral,eq:U_der} yields the relation
{\small
\begin{align*}
    U_\gamma(s,t) &=\frac{\partial}{\partial \epsilon} k\Big((X+\epsilon \gamma)|_{[0,s]}, Y|_{[0,t]}\Big)\Big|_{\epsilon = 0}  \\
    &= \frac{\partial}{\partial \epsilon}\left(\int_0^s\int_0^t U(u,v) \left(\dot{X}_u + \epsilon \dot{\gamma}_u\right)^T\dot{Y}_vdudv\right)_{\epsilon = 0}  \\
    &=  \int_0^s\int_0^t\left(U_\gamma(u,v)\dot{X}_u^T\dot{Y}_v + U(u,v) \dot{\gamma}_u^T\dot{Y}_v\right)dudv
\end{align*}
}%
Hence, differentiating the last equation first with respect to $t$ and then $s$ we get that the directional derivative $k_\gamma$ of the signature kernel along the path $\gamma$ solves the following PDE
\begin{equation}\label{eqn:deriv_PDE}
    \frac{\partial^2 U_\gamma}{\partial s \partial t} = (\dot{X}_s^T\dot{Y}_t)U_\gamma+  (\dot{\gamma}_s^T\dot{Y}_t)U
\end{equation}
with boundary conditions 
\begin{align}
    U_\gamma(0,\cdot) = 0, \ \ \ U_\gamma(\cdot, 0) =0.
\end{align} 
As a result, the gradients in \cref{eqn:meaning} of the signature kernel with respect to each of its input paths can be computed in a single call to a PDE solver, which concatenates the original state and the partial derivatives (\ref{eqn:deriv_PDE}) into a single vector. Each partial derivative follows the dynamics of (\ref{eqn:deriv_PDE}) where one replaces the direction $\gamma$ by the relevant localised impulse $\gamma_{i,j}$, $\tau_{i,j}$ for $X$ and $Y$ respectively. We outline the resulting procedure in \Cref{alg:PDE1}, where the concatenated partial derivatives are denoted by $U_{\boldsymbol{\gamma}}(s,t)$. Note that to optimize the ELBO we only need to differentiate $k(X,X)$, which is the case presented in the algorithm. The generalization to the case $k(X,Y)$ is straightforward using the chain rule.

\subsection{An explicit solution by variation of parameters}

From this second PDE (\ref{eqn:deriv_PDE}) we derive the following theorem (proved in \Cref{sec:additional_proofs}), that allows to compute the directional derivative $k_\gamma$ of the signature kernel directly from its evaluations at $X,Y$ and at $\overleftarrow{X},\overleftarrow{Y}$, where $\overleftarrow{X},\overleftarrow{Y}$ are respectively the paths $X,Y$ reversed in time.

\begin{theorem}\label{thm:kernelvarparams}
For any $\gamma \in \mathcal{X}$ the directional derivative $k_\gamma(X,Y)$ of the signature kernel along the path $\gamma$ satisfies the following relation 
\begin{align*}
     k_\gamma(X,Y) = \int_0^{T}\int_0^{T}  U(s,t) \widetilde U(T-s,T-t) (\dot{\gamma}_s^T\dot{Y}_t)dsdt
\end{align*}
where $\widetilde U(s,t) = k(\overleftarrow{X}|_{[0,s]},\overleftarrow{Y}|_{[0,t]})$ and where $\overleftarrow{X},\overleftarrow{Y}$ are respectively the paths $X,Y$ reversed in time.
\end{theorem}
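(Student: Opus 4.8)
The plan is to treat \cref{eqn:deriv_PDE} as an inhomogeneous hyperbolic (Goursat) problem and solve it by Riemann's method, exhibiting the time-reversed kernel as the Riemann function of the operator $L := \partial_s\partial_t - (\dot X_s^T\dot Y_t)$ at the corner $(T,T)$. Writing $\phi := U_\gamma$ and $g(s,t) := (\dot\gamma_s^T\dot Y_t)\,U(s,t)$, \cref{eqn:deriv_PDE} reads $L\phi = g$ with $\phi(0,\cdot)=\phi(\cdot,0)=0$, and $\phi(T,T)=k_\gamma(X,Y)$ by \cref{eq:U_der}. The target identity is exactly $\phi(T,T)=\int_0^T\int_0^T R\,g\,dsdt$ with $R(s,t):=\widetilde U(T-s,T-t)$, so the whole argument reduces to a Green-type identity once the function $R$ is properly understood.

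First I would show that $R$ solves the \emph{homogeneous} equation $\partial_s\partial_t R = (\dot X_s^T\dot Y_t)R$ with boundary data $R(T,\cdot)=R(\cdot,T)=1$. Applying \cref{eq:sig_PDE} to the reversed paths, $\widetilde U$ satisfies $\partial_s\partial_t\widetilde U=(\dot{\overleftarrow X}_s^T\dot{\overleftarrow Y}_t)\widetilde U$ with unit boundary data; since reversing a path negates its velocity, $\dot{\overleftarrow X}_s=-\dot X_{T-s}$, and the two sign flips cancel to give $\partial_s\partial_t\widetilde U(s,t)=(\dot X_{T-s}^T\dot Y_{T-t})\widetilde U(s,t)$. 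Substituting $s\mapsto T-s$ and $t\mapsto T-t$ and using the chain rule (each reflection contributes a factor $-1$, which again cancels in the mixed derivative) yields the claimed homogeneous equation for $R$, while $R(T,t)=\widetilde U(0,T-t)=1$ and $R(s,T)=\widetilde U(T-s,0)=1$ follow from the boundary conditions of $\widetilde U$.

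With both PDEs in hand I would apply the Lagrange identity $\psi\phi_{st}-\phi\psi_{st}=\partial_s(\psi\phi_t)-\partial_t(\psi_s\phi)$ to $\psi=R$ and $\phi=U_\gamma$, and integrate over $[0,T]^2$. Because $R_{st}=(\dot X^T\dot Y)R$ while $\phi_{st}=(\dot X^T\dot Y)\phi+g$, the bulk integrand collapses to $R\,g$, so that $\int_0^T\int_0^T R\,g\,dsdt = \int_0^T[R\phi_t]_{s=0}^{s=T}dt-\int_0^T[R_s\phi]_{t=0}^{t=T}ds$. I then evaluate the four boundary contributions: the edges $s=0$ and $t=0$ vanish because $\phi$, and hence $\phi_t$, is zero there; on $t=T$ the factor $R_s(s,T)=0$ since $R(\cdot,T)\equiv 1$; and on $s=T$ the factor $R(T,t)\equiv 1$ leaves $\int_0^T\phi_t(T,t)\,dt=\phi(T,T)-\phi(T,0)=\phi(T,T)$. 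This yields $\phi(T,T)=\int_0^T\int_0^T R\,g\,dsdt$, which is precisely the statement after substituting $g=(\dot\gamma_s^T\dot Y_t)U$ and $R=\widetilde U(T-\cdot,T-\cdot)$.

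The main obstacle is the boundary analysis: the identity hinges on the \emph{asymmetry} between the near edges, where $U_\gamma$ vanishes, and the far edges, where $R$ is constant so its tangential derivative dies, and a single sign error in the reversed-path velocities would destroy the cancellation that makes $R$ solve the same homogeneous PDE as $U$. A secondary technical point is justifying the integration by parts for the piecewise-linear paths considered here; since $\dot X,\dot Y,\dot\gamma$ are bounded and piecewise constant, the functions $U,U_\gamma,R$ are continuously differentiable with absolutely continuous mixed derivatives, so Fubini and the fundamental theorem of calculus apply on each sub-rectangle of the mesh and the argument goes through without additional regularity assumptions.
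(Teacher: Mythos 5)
Your proof is correct, and it takes a genuinely different route from the paper's. The paper proceeds by a variation-of-parameters ansatz: it \emph{posits} the representation $U_\gamma(s,t)=\int_0^s\int_0^t A_{s,t}(u,v)U(u,v)(\dot\gamma_u^T\dot Y_v)\,du\,dv$ with an unknown kernel $A_{s,t}(u,v)$, differentiates, matches against \cref{eqn:deriv_PDE} to deduce $A_{s,t}(s,t)=1$ and a PDE for $A$ in the outer variables, recognizes the resulting integral equation for $A_{T,T}(u,v)$ as that of $\langle S(X)_{[u,T]},S(Y)_{[v,T]}\rangle$, and finally invokes a separate time-reversal lemma ($k(X,Y)=k(\overleftarrow X,\overleftarrow Y)$, proved via the group-like structure of signatures) to rewrite this as $\widetilde U(T-u,T-v)$. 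You instead identify $R(s,t)=\widetilde U(T-s,T-t)$ as the Riemann function of the Goursat problem, verify directly that it solves the homogeneous equation with unit data on the far edges (the two sign flips from path reversal and from the reflection $s\mapsto T-s$ cancelling exactly as you say), and obtain the formula from the Lagrange identity and the asymmetric boundary cancellations. Your route buys two things: it is a verification rather than an ansatz, so it does not leave open the existence of the assumed representation (a small logical gap in the paper's argument); and it needs no appeal to the algebraic time-reversal lemma for signatures, since the reversed kernel enters only through the PDE it satisfies. The paper's route, conversely, \emph{derives} the form of the answer rather than checking it, and makes explicit the identification of the Riemann function with the signature kernel of the tail segments $X|_{[u,T]},Y|_{[v,T]}$, which is conceptually informative. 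Your closing remarks on regularity for piecewise-linear paths are also apt and address the only technical point where the integration by parts could be questioned.
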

The full backpropagation procedure is described in \Cref{alg:PDE2}.

\section{Related work}

In this section we expand on the material presented in \Cref{sec:Background}, focusing on the most recent approaches to scalable GPs on  $\mathbb{R}^d$ with VOFs and on sparse GPs for sequential data.

\paragraph{Variational Fourier Features} In \citet{hensman2017variational} the inducing variables are defined for scalar input $\mathcal{X}=\mathbb{R}$ as projections of the GP-sample onto the truncated Fourier basis. This type of inducing variables can be constructed for GPs with Mat\'{e}rn-type kernels. Although the resulting covariance matrix of the inducing variables is not diagonal, it can be decomposed into the sum of a diagonal matrix and rank one matrices. As a result it can be inverted using the \emph{Woodbury identity}, which makes it possible to scale GP inference on $\mathbb{R}$. The generalization to GPs on $\mathbb{R}^d$ is done by taking the outer product of the Fourier basis on $\mathbb{R}$. 
\paragraph{Eigenfunction inducing features} Closest to our work is the eigenfunction inducing features developed by \citet{burt2020convergence}, where the inducing variables are also defined as projections of the GP-sample onto an orthogonal basis of functions for the RKHS associated with the GP kernel. This relies on a \emph{Mercer's expansion} of the kernel. From here one identifies this orthogonal basis functions by solving an eigendecomposition problem. For example
\citet{dutordoir2020sparse} map the input data to the hypersphere $S^{d-1}\subset\mathbb{R}^d$ and then show that \textit{spherical harmonics} form an orthogonal basis for RKHS associated to \emph{zonal kernels} defined on $S^{d-1}$. 
\paragraph{GPs with signature covariances}
\citet{toth2020bayesian} propose a different sparse GP inference framework for sequential data with signature covariances (GPSig). In this work the inducing variables are either taken to be \emph{inducing sequences} (IS) in the original input space (GPSig-IS) of sequences or \emph{inducing tensors} (IT) in the corresponding feature space (GPSig-IT). The chosen covariance function is an approximation of the signature kernel based on truncating the signature to a finite level $n$. For GPSig-IT, this truncation makes the feature space finite dimensional and allows to optimize inducing tensors defined over such truncated space. Unlike our method, the inducing tensors are additional variational parameters to optimize. The covariance matrix $C_{\mathbf{uu}}$ is dense and its inversion incurs an additional $\mathcal{O}(M^3)$ cost. In Table \ref{table:Complexities} we compare the computational complexities of GPSig-IT, GPSig-IS and SigGPDE. A similar table for the memory complexity can be found in \Cref{sec:algorithmic_details}.

\begin{table}[h]
\begin{center}
\resizebox{\linewidth}{!}{%
\begin{tabular}{l c c c}
 \toprule
  Operation &  SigGPDE (ours) & GPSig-IT & GPSig-IS \\
    \midrule
    $C_{\mathbf{u}\mathbf{u}}$ & $\mathcal{O}(1)$ & $\mathcal{O}(n^2M^2d)$ & $\mathcal{O}((n+d)M^2\tilde{\ell}^2)$\\
    $C_{\mathbf{f}\mathbf{u}}$ & $\mathcal{O}(\tilde{N}M\ell)$&  $\mathcal{O}(n^2\tilde{N}M\ell d)$ & $\mathcal{O}((n+d)\tilde{N}M\tilde{\ell}\ell)$\\
    $\mathrm{diag}(C_{\mathbf{ff}})$ & $\mathcal{O}(d\tilde{N}\ell^2)$ &  $\mathcal{O}((n+d)\tilde{N}\ell^2)$ & $\mathcal{O}((n+d)\tilde{N}\ell^2)$ \\
    Lin. Alg. &$\mathcal{O}(\tilde{N}M^2)$  &  $\mathcal{O}(\tilde{N}M^2+M^3)$ & $\mathcal{O}(\tilde{N}M^2+M^3)$\\
    \bottomrule
\end{tabular}%
}
\caption{Comparison of time complexities. $M$ is the number of inducing variables, $\tilde{N}$ the batch size, $d$ the number of channels in the time series, $\ell$ the length of the sequences, $n$  the truncation level (for GPSig-IT and GPSig-IS) and $\tilde{l}$ the length of the inducing sequences. The last line of the table corresponds to linear algebra operations including matrix multiplication and matrix inversion.
}
\label{table:Complexities}
\end{center}
\vspace{-10pt}
\end{table}

\begin{figure*}
    \centering
    \includegraphics[trim={7cm 0cm 5cm 0},clip,scale=0.4]{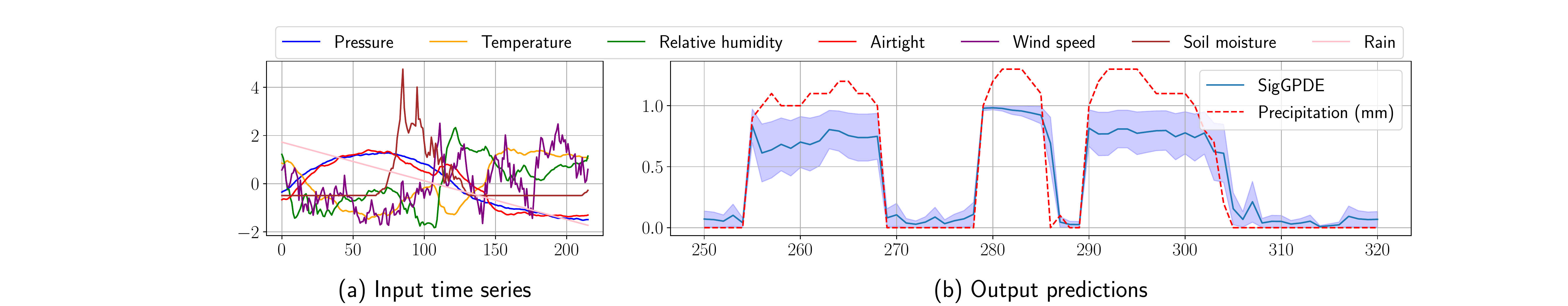}
  \caption{Weather forecast dataset. (a) One (standard scaled) multivariate time series $\mathbf{x}$ in input to the GP model. (b) Posterior mean of the SigGPDE GP when evaluated at multiple input time series like $\mathbf{x}$ on the test set. The actual precipitation amount is given for reference.}
    \label{fig:rainfall}
\end{figure*}

\section{Experiments}\label{sec:experiments}
In this last section, we benchmark SigGPDE against GPSig-IT and GPSig-IS from \citet{toth2020bayesian} on various multivariate time series classification tasks. For GPSig-IS, we use inducing sequences of length $\tilde{\ell}=5$ as recommended in \citet{toth2020bayesian}. We highlight how SigGPDE performs competitively in terms of accuracy and uncertainty quantification but with a significant speed-up in the fitting compared to the other baselines. 


We use a mixture of UEA \& UCR time series datasets (\url{timeseriesclassification.com}) and real world data for the final example. In the latter we discuss how the predictions provided by SigGPDE can be interpreted in a natural way via the interpretability of the interated integrals defining the signature and discussed in \Cref{ssec:sig-features}. 

We measure the classification accuracy on the test set, assess the uncertainty quantification with mean negative log-predictive probabilities (NLPP) and report the runtime per-iteration. For each dataset all models are trained $3$ times using a random training-validation split. The validation split is used to monitor the NLPP when optimizing the hyperparameters of the models. Further details on the training procedure can be found in \Cref{sec:experimental_details}. All code is written in TensorFlow using GPFlow \cite{de2017gpflow}.

\subsection{Classifying digits in sequential MNIST}\label{ssec:pendigits}
We start with a handwritten digit classification task, where writers were asked to draw the digits from $0$ to $9$. The instances are made up of $2$-d trajectories of the pen traced across a digital screen. The trajectories are of length $\ell=8$. The training and test sets are of size $7\,494$ and $3\,498$ respectively. We made use of $M=500$ inducing features. In the results reported in Table \ref{table:pendigits}, SigGPDE achieves even better accuracy and NLPP than the GPSig baselines, whilst being almost twice as fast than GPSig-IT.

\begin{table}[h]
\caption{Classification for sequential MNIST (PenDigits). The higher the Mean Acc. and the lower the NLPP the better.}
\begin{center}
\resizebox{\linewidth}{!}{%
\begin{tabular}{lccc}
 \toprule
Model & Mean Acc. & NLPP & Time \\
\midrule 
GPSig-IS & $97.42\pm 0.17$ & $0.096\pm0.005$ &  $0.186$ (s/iter)\\
GPSig-IT & $96.66\pm 0.59$ & $0.115\pm0.018$ & $0.036$ (s/iter)\\
\midrule

SigGPDE & $97.73\pm 0.13$& $0.085\pm0.001$ & $0.022$ (s/iter)\\ 
\bottomrule
\end{tabular}%
}
\label{table:pendigits}
\end{center}
\end{table}
\subsection{Detecting whale call signals}\label{ssec:whales}
In this example the task is to classify audio signals and distinguish one emitted from right whales from noise. The dataset (called RightWhaleCalls in the UEA archive) contains $10\,934$ train cases and $5\,885$ test cases. The signals are one-dimensional, sampled at $2$kHz over $2$ seconds, hence of length $4\,000$. We tackle this problem as a multivariate time series classification task, by taking the spectrogram of the univariate audio signal. The resulting streams are made of $29$ channels corresponding to selected frequencies and are $30$ time steps long.  The results in Table \ref{table:whales} are obtained with $M=700$ and show the significant speed-up of SigGPDE by almost one order of magnitude compared to GPSig. This speed-up is compensated by a minimal decrease in performance both in terms of accuracy and NLPP. 

\begin{table}[h]
\caption{Classification for whale call signals}
\begin{center}
\resizebox{\linewidth}{!}{%
\begin{tabular}{lccc}
 \toprule
Model & Mean Acc. & NLPP & Time \\
\midrule 
GPSig-IS & $86.97\pm 0.11$ & $0.367\pm0.005$ & $0.438$ (s/iter)\\
GPSig-IT  & $87.70\pm 0.42$ & $0.357\pm0.003$ & $0.048$ (s/iter) \\
\midrule
SigGPDE & $86.76\pm 0.36$ & $0.382\pm0.002$ & $0.008$ (s/iter)\\ 
\bottomrule
\end{tabular}%
}
\label{table:whales}
\end{center}
\end{table}

\subsection{Large scale classification of satellite time series}\label{ssec:crops}
This is our large scale classification example on 
$1$ million time series. The time series in this dataset represent a vegetation index, calculated from remote sensing spectral data. The $24$ classes represent different land cover types \cite{petitjean2012satellite}. The aim in classifying these
time series of length $\ell=46$ is to map different vegetation profiles to different types of crops and
forested areas. Due to the sheer size of this dataset we only compare SigGPDE to GPSig-IT as GPSig-IS is not scalable to such large dataset. In \Cref{fig:crops} we report the accuracy, time per iteration and ELBO by progressively increasing the number of inducing variables. Compared to SigGPDE, GPSig-IT has additional variational parameters, namely the inducing tensors. This extra flexibility explains the better performances of GPSig-IT when few inducing variables are used. However, as the number of inducing features increases, SigGPDE catches up and outperforms its competitor in all monitored metrics.

\begin{figure}[h]
    \centering
    \includegraphics[trim={0.3cm 0cm 0.cm 0},clip,scale=0.37]{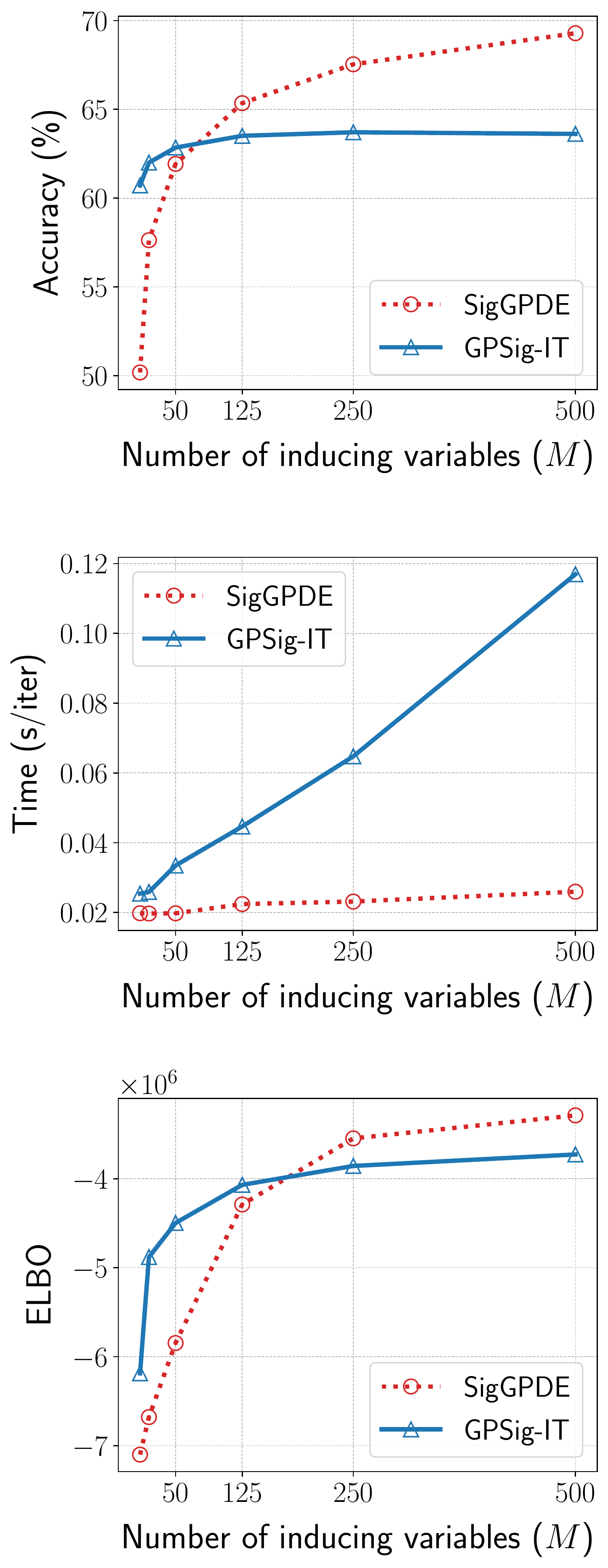}
    \caption{Large scale ($1$M) classification of satellite time series. Comparison of various metrics as functions of inducing variables.}
    \label{fig:crops}
\end{figure}

\subsection{Weather forecast}\label{ssec:weather}

In this last example we will be using a dataset of climatic variables recorded by the Max Planck Institute for Biogeochemistry\footnote{\url{https://www.bgc-jena.mpg.de/wetter/}} in the weather stations of WS Beutenberg and WS Saaleaue from 2004-2020. The data consists of $7$-dimensional time series recorded once per $10$ minutes where each channel represents a weather feature such as temperature, pressure, humidity etc. The goal is to predict whether it will rain over the next hour from the trajectory of all other features in the preceding $6$ hours. To obtain binary labels for the classification task we set the label to $1$ if the precipitation is larger than $1$mm and to $0$ otherwise. The inference mechanism is depicted on \Cref{fig:rainfall}.

A key feature proper to our model SigPDE is its interpretability. Looking at the variational mean vector $\mathbf{m}$ in \cref{eq:approximate_posterior}, we can extract the terms with highest relevance learned by the model. As discussed in \Cref{ssec:sig-features}, thanks to the corresponding signature features it is possible to infer which signature features used by the GP are more responsible for the produced outcome. The most relevant predictive features for this weather forecast experiment are represented in \Cref{fig:interpretability}. 

\begin{figure}[H]
    \centering
    \includegraphics[scale=0.4]{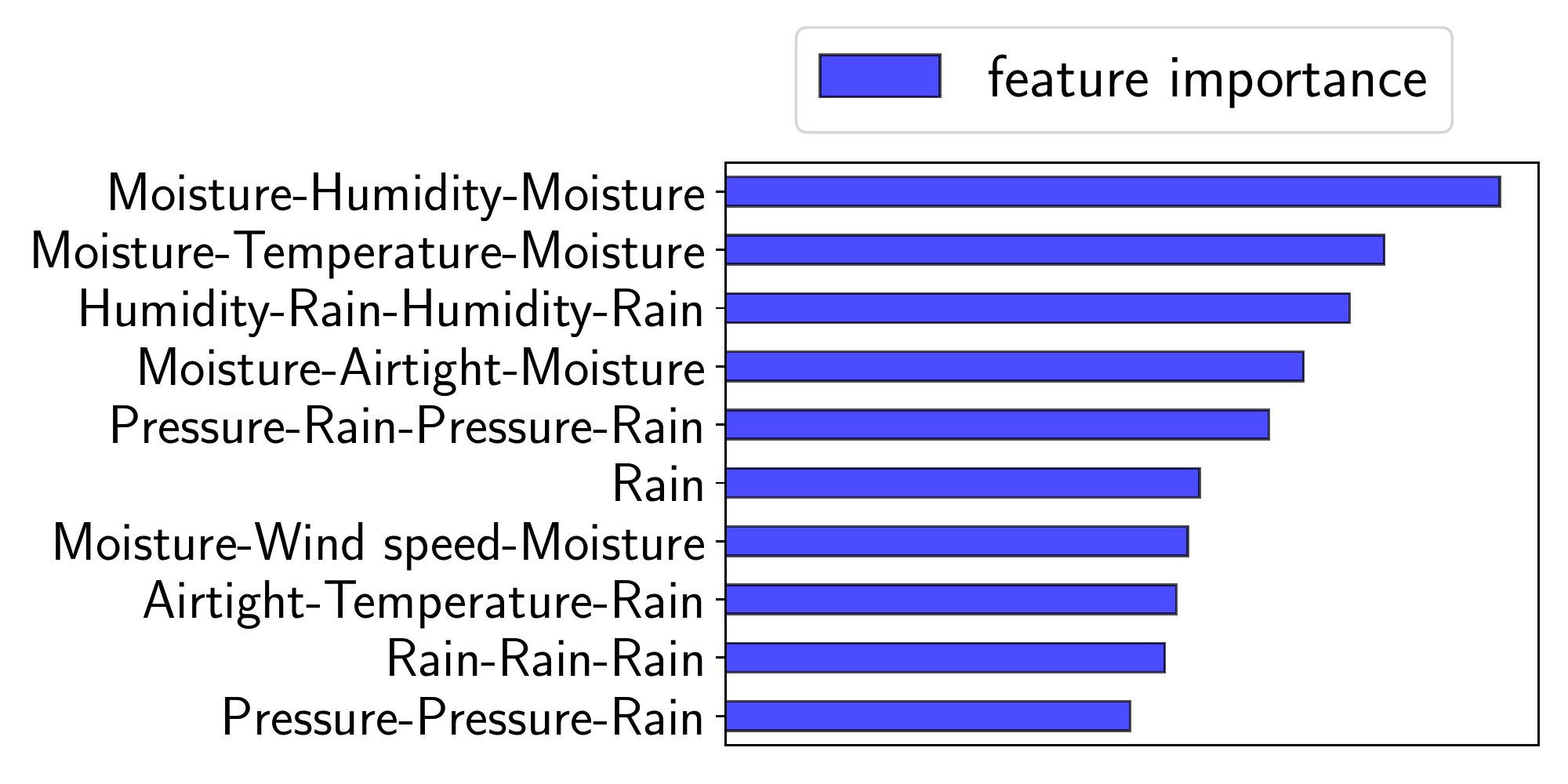}
    \caption{Top $10$ signature features (by importance) used by SigGPDE to predict whether or not it will rain in the next hour from previous weather data. Each feature is a term in the signature. For example \textit{Moisture-Humidity-Moisture} means that a change in the moisture channel followed by a change in the humidity channel and a change in the moisture channel is an important pattern.}
    \label{fig:interpretability}
\end{figure}

\section{Conclusion}
    In this paper we have developed SigGPDE, a framework to perform variational inference for GP models on sequential data with orthogonal signature features. Firstly, we constructed inducing variables so that their covariance matrix is diagonal. Secondly, we showed that the gradients of the signature kernel are solutions of a hyperbolic PDE. As a result the ELBO is cheap to evaluate as gradient descent does not require backpropagating through the operations of the PDE solver. We benchmarked SigGPDE against the state-of-the-art GPSig on different time series classification tasks, showing a significant speed up and similar performance.

\section*{Acknowledgements}
ML and CS were respectively supported by the EPSRC grants EP/L016710/1 and EP/R513295/1. TD acknowledges support from EPSRC (EP/T004134/1), UKRI Turing AI Fellowship (EP/V02678X/1), and the Lloyd’s Register Foundation programme on Data Centric Engineering through the London Air Quality project. ML, CS, TC and TL were supported by the Alan Turing Institute under the EPSRC grant EP/N510129/1 and DataSig under the grant EP/S026347/1.

\bibliography{references}

\begin{thebibliography}{27}
\providecommand{\natexlab}[1]{#1}
\providecommand{\url}[1]{\texttt{#1}}
\expandafter\ifx\csname urlstyle\endcsname\relax
  \providecommand{\doi}[1]{doi: #1}\else
  \providecommand{\doi}{doi: \begingroup \urlstyle{rm}\Url}\fi

\bibitem[Arribas et~al.(2018)Arribas, Goodwin, Geddes, Lyons, and
  Saunders]{arribas2018signature}
Arribas, I.~P., Goodwin, G.~M., Geddes, J.~R., Lyons, T., and Saunders, K.~E.
\newblock A signature-based machine learning model for distinguishing bipolar
  disorder and borderline personality disorder.
\newblock \emph{Translational psychiatry}, 8\penalty0 (1):\penalty0 1--7, 2018.

\bibitem[Burt et~al.(2020{\natexlab{a}})Burt, Rasmussen, and van~der
  Wilk]{burt2020convergence}
Burt, D.~R., Rasmussen, C.~E., and van~der Wilk, M.
\newblock Convergence of sparse variational inference in gaussian processes
  regression.
\newblock \emph{The Journal of Machine Learning Research}, 21:\penalty0 1--63,
  2020{\natexlab{a}}.

\bibitem[Burt et~al.(2020{\natexlab{b}})Burt, Rasmussen, and van~der
  Wilk]{burt2020variational}
Burt, D.~R., Rasmussen, C.~E., and van~der Wilk, M.
\newblock Variational orthogonal features.
\newblock \emph{arXiv preprint arXiv:2006.13170}, 2020{\natexlab{b}}.

\bibitem[Cass et~al.(2020)Cass, Lyons, Salvi, and Yang]{cass2020computing}
Cass, T., Lyons, T., Salvi, C., and Yang, W.
\newblock Computing the full signature kernel as the solution of a goursat
  problem.
\newblock \emph{arXiv preprint arXiv:2006.14794}, 2020.

\bibitem[Chevyrev \& Kormilitzin(2016)Chevyrev and
  Kormilitzin]{chevyrev2016primer}
Chevyrev, I. and Kormilitzin, A.
\newblock A primer on the signature method in machine learning.
\newblock \emph{arXiv:1603.03788}, 2016.

\bibitem[De~G.~Matthews et~al.(2017)De~G.~Matthews, Van Der~Wilk, Nickson,
  Fujii, Boukouvalas, Le{\'o}n-Villagr{\'a}, Ghahramani, and
  Hensman]{de2017gpflow}
De~G.~Matthews, A.~G., Van Der~Wilk, M., Nickson, T., Fujii, K., Boukouvalas,
  A., Le{\'o}n-Villagr{\'a}, P., Ghahramani, Z., and Hensman, J.
\newblock Gpflow: A gaussian process library using tensorflow.
\newblock \emph{The Journal of Machine Learning Research}, 18\penalty0
  (1):\penalty0 1299--1304, 2017.

\bibitem[Diehl \& Reizenstein(2019)Diehl and Reizenstein]{diehl2019invariants}
Diehl, J. and Reizenstein, J.
\newblock Invariants of multidimensional time series based on their
  iterated-integral signature.
\newblock \emph{Acta Applicandae Mathematicae}, 164\penalty0 (1):\penalty0
  83--122, 2019.

\bibitem[Dozat(2016)]{dozat2016incorporating}
Dozat, T.
\newblock Incorporating nesterov momentum into adam.
\newblock 2016.

\bibitem[Dutordoir et~al.(2020)Dutordoir, Durrande, and
  Hensman]{dutordoir2020sparse}
Dutordoir, V., Durrande, N., and Hensman, J.
\newblock Sparse gaussian processes with spherical harmonic features.
\newblock In \emph{International Conference on Machine Learning}, pp.\
  2793--2802. PMLR, 2020.

\bibitem[Hensman et~al.(2017)Hensman, Durrande, and
  Solin]{hensman2017variational}
Hensman, J., Durrande, N., and Solin, A.
\newblock Variational fourier features for gaussian processes.
\newblock \emph{The Journal of Machine Learning Research}, 18\penalty0
  (1):\penalty0 5537--5588, 2017.

\bibitem[Kanagawa et~al.(2018)Kanagawa, Hennig, Sejdinovic, and
  Sriperumbudur]{kanagawa2018gaussian}
Kanagawa, M., Hennig, P., Sejdinovic, D., and Sriperumbudur, B.~K.
\newblock Gaussian processes and kernel methods: A review on connections and
  equivalences.
\newblock \emph{arXiv preprint arXiv:1807.02582}, 2018.

\bibitem[Kir{\'a}ly \& Oberhauser(2019)Kir{\'a}ly and
  Oberhauser]{kiraly2019kernels}
Kir{\'a}ly, F.~J. and Oberhauser, H.
\newblock Kernels for sequentially ordered data.
\newblock \emph{Journal of Machine Learning Research}, 20, 2019.

\bibitem[Lemercier et~al.(2021)Lemercier, Salvi, Damoulas, Bonilla, and
  Lyons]{lemercier2020distribution}
Lemercier, M., Salvi, C., Damoulas, T., Bonilla, E.~V., and Lyons, T.
\newblock Distribution regression for continuous-time processes via the
  expected signature.
\newblock In \emph{Artificial Intelligence and Statistics}, 2021.

\bibitem[Lyons(2014)]{lyons2014rough}
Lyons, T.
\newblock Rough paths, signatures and the modelling of functions on streams.
\newblock \emph{Proceedings of the International Congress of Mathematicians,
  Korea}, 2014.

\bibitem[Lyons(2010)]{esig}
Lyons, T. e.~a.
\newblock Coropa computational rough paths (software library).
\newblock 2010.
\newblock URL \url{http:// coropa.sourceforge.net/}.

\bibitem[Lyons(1998)]{lyons1998differential}
Lyons, T.~J.
\newblock Differential equations driven by rough signals.
\newblock \emph{Revista Matem{\'a}tica Iberoamericana}, 14\penalty0
  (2):\penalty0 215--310, 1998.

\bibitem[Lyons et~al.(2007)Lyons, Caruana, and L{\'e}vy]{lyons2007differential}
Lyons, T.~J., Caruana, M., and L{\'e}vy, T.
\newblock \emph{Differential equations driven by rough paths}.
\newblock Springer, 2007.

\bibitem[Matthews(2017)]{matthews2017scalable}
Matthews, A. G. d.~G.
\newblock \emph{Scalable Gaussian process inference using variational methods}.
\newblock PhD thesis, University of Cambridge, 2017.

\bibitem[Moore et~al.(2019)Moore, Lyons, Gallacher, Initiative,
  et~al.]{moore2019using}
Moore, P., Lyons, T., Gallacher, J., Initiative, A. D.~N., et~al.
\newblock Using path signatures to predict a diagnosis of alzheimer’s
  disease.
\newblock \emph{PloS one}, 14\penalty0 (9), 2019.

\bibitem[Petitjean et~al.(2012)Petitjean, Inglada, and
  Gan{\c{c}}arski]{petitjean2012satellite}
Petitjean, F., Inglada, J., and Gan{\c{c}}arski, P.
\newblock Satellite image time series analysis under time warping.
\newblock \emph{IEEE transactions on geoscience and remote sensing},
  50\penalty0 (8):\penalty0 3081--3095, 2012.

\bibitem[Rasmussen \& Williams(2006)Rasmussen and
  Williams]{rasmussen-williams-book}
Rasmussen, C.~E. and Williams, C. K.~I.
\newblock \emph{{G}aussian processes for machine learning}.
\newblock The MIT Press, 2006.

\bibitem[Reizenstein \& Graham(2018)Reizenstein and
  Graham]{reizenstein2018iisignature}
Reizenstein, J. and Graham, B.
\newblock The iisignature library: efficient calculation of iterated-integral
  signatures and log signatures.
\newblock \emph{arXiv preprint arXiv:1802.08252}, 2018.

\bibitem[Steinwart \& Christmann(2008)Steinwart and
  Christmann]{steinwart2008support}
Steinwart, I. and Christmann, A.
\newblock \emph{Support vector machines}.
\newblock Springer Science \& Business Media, 2008.

\bibitem[Tavenard et~al.(2020)Tavenard, Faouzi, Vandewiele, Divo, Androz,
  Holtz, Payne, Yurchak, Ru{\ss}wurm, Kolar, and Woods]{JMLR:v21:20-091}
Tavenard, R., Faouzi, J., Vandewiele, G., Divo, F., Androz, G., Holtz, C.,
  Payne, M., Yurchak, R., Ru{\ss}wurm, M., Kolar, K., and Woods, E.
\newblock Tslearn, a machine learning toolkit for time series data.
\newblock \emph{Journal of Machine Learning Research}, 21\penalty0
  (118):\penalty0 1--6, 2020.
\newblock URL \url{http://jmlr.org/papers/v21/20-091.html}.

\bibitem[Titsias(2009)]{titsias2009variational}
Titsias, M.
\newblock Variational learning of inducing variables in sparse gaussian
  processes.
\newblock In \emph{Artificial Intelligence and Statistics}, pp.\  567--574,
  2009.

\bibitem[Toth \& Oberhauser(2020)Toth and Oberhauser]{toth2020bayesian}
Toth, C. and Oberhauser, H.
\newblock Bayesian learning from sequential data using gaussian processes with
  signature covariances.
\newblock In \emph{International Conference on Machine Learning}, pp.\
  9548--9560. PMLR, 2020.

\bibitem[Xu \& Zhang(2007)Xu and Zhang]{xu2007refinable}
Xu, Y. and Zhang, H.
\newblock Refinable kernels.
\newblock \emph{Journal of Machine Learning Research}, 8\penalty0 (9), 2007.

\end{thebibliography}
\bibliographystyle{icml2021}


\newpage

\appendix 

\twocolumn[
\icmltitle{Supplementary for\\ SigGPDE: Scaling Sparse Gaussian Processes on Sequential Data}
]

\section{Additional Proof}\label{sec:additional_proofs}

In this section we prove \Cref{thm:kernelvarparams} from the  main paper which yields an efficient algorithm to compute the gradients of the signature kernel with respect to its input paths. We recall \Cref{thm:kernelvarparams} below.

\begin{customthm}{4.1}
For any $\gamma \in \mathcal{X}$ the directional derivative $k_\gamma(X,Y)$ of the signature kernel along the path $\gamma$ satisfies the following relation 
\begin{align*}
     k_\gamma(X,Y) = \int_0^{T}\int_0^{T}  U(s,t) \widetilde U(T-s,T-t) (\dot{\gamma}_s^T\dot{Y}_t)dsdt
\end{align*}
where $\widetilde U(s,t) = k(\overleftarrow{X}|_{[0,s]},\overleftarrow{Y}|_{[0,t]})$ and where $\overleftarrow{X},\overleftarrow{Y}$ are respectively the paths $X,Y$ reversed in time.
\end{customthm}

Before proving \Cref{thm:kernelvarparams} we need the following important lemma.

\begin{lemma}\label{lemma:ker_inv}
For any two paths continuous paths of bounded variation $X,Y \in \mathcal{X}$ the signature kernel satisfies the following relation
\begin{equation}\label{eqn:inv}
    k(X,Y) = k(\overleftarrow{X},\overleftarrow{Y})
\end{equation}
where $\overleftarrow{X},\overleftarrow{Y}$ are the respectively $X,Y$ reversed in time.
\end{lemma}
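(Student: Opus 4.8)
The plan is to reduce the identity to a level-by-level statement about the individual signature features and then exploit the symmetry of the kernel expansion in \cref{eq:k}. Writing $\overleftarrow{X}_s = X_{T-s}$ for the time-reversed path, the main step is to establish, for every multi-index $\boldsymbol\alpha = (\alpha_1,\ldots,\alpha_k)$, the reversal formula
\[
    S(\overleftarrow{X})^{(\alpha_1,\ldots,\alpha_k)} = (-1)^k\, S(X)^{(\alpha_k,\ldots,\alpha_1)}.
\]

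To obtain this I would start from the iterated-integral definition \cref{eqn:sig}, use the relation $\dot{\overleftarrow{X}}^{(\alpha)}_s = -\dot{X}^{(\alpha)}_{T-s}$ to factor out $(-1)^k$, and then perform the change of variables $u_i = T-s_i$. This reverses the orientation of the simplex, turning the constraint $0<s_1<\cdots<s_k<T$ into $0<u_k<\cdots<u_1<T$; relabelling the integration variables so that the constraint reads $0<v_1<\cdots<v_k<T$ amounts to the substitution $u_i \mapsto v_{k+1-i}$, which reverses the order in which the channel indices $\alpha_1,\ldots,\alpha_k$ appear and produces exactly $S(X)^{(\alpha_k,\ldots,\alpha_1)}$.

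With the reversal formula in hand, the remainder is purely algebraic. Substituting into the expansion \cref{eq:k} gives
\[
    k(\overleftarrow{X},\overleftarrow{Y}) = \sum_{k=0}^\infty \sum_{|\boldsymbol\alpha|=k} (-1)^k S(X)^{\overleftarrow{\boldsymbol\alpha}}\,(-1)^k S(Y)^{\overleftarrow{\boldsymbol\alpha}},
\]
where $\overleftarrow{\boldsymbol\alpha} = (\alpha_k,\ldots,\alpha_1)$ denotes the reversed multi-index. The two factors $(-1)^k$ multiply to $(-1)^{2k}=1$, and since $\boldsymbol\alpha \mapsto \overleftarrow{\boldsymbol\alpha}$ is a bijection of the set of length-$k$ multi-indices onto itself, I can reindex the inner sum by $\boldsymbol\beta := \overleftarrow{\boldsymbol\alpha}$ to recover $\sum_{k}\sum_{|\boldsymbol\beta|=k} S(X)^{\boldsymbol\beta} S(Y)^{\boldsymbol\beta} = k(X,Y)$, as required.

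The main obstacle is the careful bookkeeping in the change of variables of the first step: one must simultaneously track the sign $(-1)^k$ arising from differentiating the reversed path and the reversal of the simplex ordering, and verify that they combine to reverse the multi-index while leaving no residual sign. The assumption that $X,Y$ are continuous of bounded variation guarantees that the iterated integrals are well defined and that the change-of-variables and Fubini manipulations are justified. An alternative, more abstract route would identify $S(\overleftarrow{X})$ with the group inverse of $S(X)$ among the group-like elements (via Chen's identity applied to the tree-like loop $X \ast \overleftarrow{X}$), but the direct computation above is self-contained and keeps the sign accounting transparent.
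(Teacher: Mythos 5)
Your proof is correct and follows essentially the same route as the paper's: both rest on the coordinate identity $S(\overleftarrow{X})^{(\alpha_1,\ldots,\alpha_k)} = (-1)^k S(X)^{(\alpha_k,\ldots,\alpha_1)}$, followed by the observation that the two factors of $(-1)^k$ cancel in the expansion of \cref{eq:k} and that multi-index reversal is a bijection, so the sum reindexes back to $k(X,Y)$. The only difference is that you derive this identity directly by a change of variables in the iterated integrals, whereas the paper obtains it by citing the standard rough-path fact that $S(\overleftarrow{X})=S(X)^{-1}$ in the group of grouplike elements, with the inverse acting by word reversal and a sign on odd-length words.
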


\begin{proof}
It is a standard result in rough path theory (see for example \citep{lyons2007differential}) that $S(\overleftarrow{X}) = S(X)^{-1}$, where the inverse is taken in the set of grouplike elements, which is a group. The operator on grouplike elements $g : S(X) \mapsto S(X)^{-1}$ reverses the order of the letters in each word and multiplies the result by $-1$ if the length of the word is odd. Expanding out $k(\overleftarrow{X},\overleftarrow{Y})$ coordinate-wise it is easy to see that the two $-1$'s for words of odd length cancel as multiplied together, therefore the expansion of $k(X, Y)$ matches the one of $k(\overleftarrow{X},\overleftarrow{Y})$.
\end{proof}

Recall the notation for the signature kernel and its directional derivative used in the statement of  \Cref{thm:kernelvarparams}:
\begin{align*}
    U(s,t) &:= k\left(X|_{[0,s]}, Y|_{[0,t]}\right)\\
    U_\gamma(s,t) &:= k_\gamma\left(X|_{[0,s]}, Y|_{[0,t]}\right)
\end{align*}

\begin{proof}[Proof of Theorem \ref{thm:kernelvarparams}]

Let $\gamma : [0,T] \to \mathbb{R}^d$ be a continuous path of bounded variation along which we wish to differentiate $k$. Let's assume that for any $s,t \in [0,T]$ there exists a function $A_{s,t}:[0,T] \times [0,T] \to \mathbb{R}$ such that
\begin{equation}\label{eq:var_par}
    U_\gamma(s,t)= \int_0^s\int_0^t A_{s,t}(u,v) U(u,v) (\dot{\gamma}_u^T\dot{Y}_v)dudv
\end{equation}
Differentiating $U_\gamma$ with respect to $s$ and $t$ we get
\begin{align}
    \frac{\partial^2 U_\gamma}{\partial s \partial t} &= \int_0^s \int_0^t \frac{\partial^2
    A_{s,t}(u,v)}{\partial s \partial t}U(u,v) (\dot{\gamma}_u^T\dot{Y}_v)dudv \nonumber\\
    &+  A_{s,t}(s,t)U(s,t)(\dot{\gamma}_s^T \dot{Y}_t)\label{eqn:b}
\end{align}
By \cref{eqn:deriv_PDE} in the main paper we know that the directional derivative of the signature kernel along the path $\gamma$ solves the following PDE
\begin{equation}\label{eqn:a}
    \frac{\partial^2 U_\gamma}{\partial s \partial t} = (\dot{X}_s^T\dot{Y}_t)U_\gamma(s,t) +  (\dot{\gamma}_s^T\dot{Y}_t)U(s,t) 
\end{equation}
Equating \cref{eqn:b} and \cref{eqn:a} we deduce that $A_{s,t}(s,t)=1$ and 
\begin{align*}
    &\int_0^s \int_0^t \frac{\partial^2 A_{s,t}(u,v)}{\partial s \partial t}U(u,v) (\dot{\gamma}_u^T\dot{Y}_v)dudv \\
    &= U_\gamma(s,t)(\dot{X}_s^T\dot{Y}_t)\\
    &= (\dot{X}_s^T\dot{Y}_t)\int_0^s\int_0^t A_{s,t}(u,v)U(u,v) (\dot{\gamma}_u^T\dot{Y}_v) dudv
\end{align*}
Which implies that
\begin{equation}
    \frac{\partial^2 A_{s,t}(u,v)}{\partial s \partial t} = (\dot{X}_s^T\dot{Y}_t)A_{s,t}(u,v)
\end{equation}
Or equivalently, by integrating with respect to $s$ and $t$
\begin{equation}
    A_{s,t}(u,v) = 1 + \int_{u}^s\int_{v}^tA_{s',t'}(u,v) (\dot{X}_{s'}^T\dot{Y}_{t'}) ds'dt'
\end{equation}
Hence
\begin{align}
    A_{T,T}(u,v) &= \langle S(X)_{[u,T]}, S(Y)_{[v,T]} \rangle \\
    &= k(\overleftarrow{X}|_{[0,T-u]},\overleftarrow{Y}|_{[0,T-v]})
\end{align}
where the last equality is a consequence of \Cref{lemma:ker_inv}. Pluging back this result into \cref{eq:var_par} concludes the proof.
\end{proof}


\section{Additional Experimental Details}\label{sec:experimental_details}
In this section we describe the experimental setup for \Cref{sec:experiments}. All experiments were conducted on NVIDIA Tesla P100 GPUs.

    \subsection{Data collection process}\label{ssec:data_collection}
    
    The classification tasks of \Cref{ssec:pendigits,ssec:whales} were performed on two datasets (PenDigits, RightWhaleCalls) from the UCR \& UEA time series classification
repository.\footnote{\url{https://timeseriesclassification.com}} For the large scale classification experiment of \Cref{ssec:crops} we used a dataset of 1M satellite time series (STS).\footnote{\url{https://cloudstor.aarnet.edu.au/plus/index.php/s/pRLVtQyNhxDdCoM}} Lastly, the climatic data (WeatherForecast) for rainfall prediction task in \Cref{ssec:weather} was downloaded from the Max Planck Institute for Biogeochemistry website.\footnote{\url{https://www.bgc-jena.mpg.de/wetter/weather_data.html}} 

Data pre-processing included the following two steps. As explained in \Cref{ssec:sig-features}, we first add a monotonically increasing coordinate to all multivariate time series that we call "time", which effectively augments by one the number of channels. This is a standard procedure employed within signature based methods \citep{toth2020bayesian,chevyrev2016primer}. Then, we standard scale the time series using tslearn library \citep{JMLR:v21:20-091}. This is particularly important for the WeatherForecast dataset where channels have different scales. Additional processing steps have been performed for two datasets (RightWhaleCalls, WeatherForecast) which we treat separately next. 

A standard data transformation to tackle classification tasks on audio signals consists in computing their \textit{spectrograms}. We follow this procedure for the RightWhaleCalls dataset which contains univariate highly-oscillatory time series of length $2\,000$. We used the scipy Python library to do so. The spectrogram is commonly represented as a graph with one axis representing time, the other axis representing the frequency, and the color intensity representing the amplitude of a particular frequency at a particular time. In this paper, we consider the spectrogram as a multivariate time series, where each channel represents the change in amplitude of a particular frequency over time. Furthermore, exploiting the fact that frequencies in whale call signals are typically between $50$ and $300$Hz, we only consider frequencies which fall within this range. As a result we obtain $28$-dimensional time series each of length $30$. We then apply the pre-processing steps described above.

To create the WeatherForecast dataset we used the recordings of various climatic variables in two weather stations located in Germany from $2004$ to $2020$. The outliers were filtered out, and we used the recordings of $7$ variables (depicted on \Cref{fig:rainfall}) over $6$ hours in order to predict whether it would rain by more than $1$mm over the next hour. There is one recording every $10$min resulting in input time series of length $\ell=36$. Since there were much fewer positive cases (raining) than negative cases (not raining), we dropped at random a fraction of the data, such that the ratio of positive/negative examples is brought down to $3$.

    \subsection{Training procedure}\label{ssec:training_procedure}
    
      The datasets for classification of sequential digits (PenDigits), audio signals (RightWhaleCalls), and satellite time series (STS) come with a predefined test-train split. In order to report standard deviations on our results we subsampled $20\%$ (PenDigits,RightWhaleCalls) or $2\%$ (STS) of the training set to form a validation set. 
    
     The training was equally split into $3$ different phases. During the first phase, only the variational parameters are trained. For the second phase, both the variational parameters and the hyperparameters of the kernel are trained. During the last phase the variational parameters are trained on the full training set (the validation data being merged back). Overall, the hyperparameters are fixed for two-third of the iterations. SigGPDE and the GPSig-IT/IS baselines have the same set of hyperparameters, which correspond to the scaling factors for each channel for the ARD parametrization of the signature kernel \Cref{ssec:parametrization}. Those were initialized with the same value for all models. The inducing tensors for GPSig-IT and inducing sequences for GPSig-IS were initialized following the procedure outlined in \cite{toth2020bayesian}. We recall that for SigGPDE there is no such parameters to initialize. As recommended in \cite{toth2020bayesian}, we use a truncation level of $n=4$ for their signature kernel algorithm (GPSig-IT/IS).

    The minibatch size is either $50$ (PenDigits, RightWhaleCalls) or $200$ (STS).  We used the Nadam optimizer \citep{dozat2016incorporating} with learning rate $10^{-3}$. In the main paper we report the time per iteration which corresponds to one minibatch.

\section{Additional Algorithmic Details}\label{sec:algorithmic_details}

In this section we start by outlining the space and time complexities of the algorithms underlying SigGPDE. Then, we explain how we have developed a dedicated CUDA TensorFlow operator for GPU acceleration to speed-up the computation of the signature kernel and its gradients.

    \subsection{Complexity analysis}\label{ssec:complexity_analysis}
    
    The main algorithms underpinning SigGPDE consist in computing three different covariance matrices to evaluate the ELBO. These are the covariance matrix between the inducing variables $\mathbf{u}$ (denoted by $C_{\mathbf{uu}}$), between the marginal $\mathbf{f}$ and the inducing variables (denoted by $C_{\mathbf{fu}}$), and finally the covariance matrix of $\mathbf{f}$ (its diagonal is denoted by $\mathrm{diag}(C_{\mathbf{ff}})$). In \Cref{table:time,table:space} we compare the time and space complexities for the corresponding SigGPDE algorithms to those of GPSig-IT/IS. 
    
    In the SigGPDE sparse variational inference framework, $C_{\mathbf{uu}}$ is diagonal which lowers both the memory and computational costs (see first line \Cref{table:time,table:space}). Besides there is no need to compute the Cholesky decomposition of $C_{\mathbf{uu}}$ to invert it (see last line \Cref{table:time}). Lastly, in SigGPDE the inducing variables do not depend on any variational parameter (see last line \Cref{table:space}). 
\begin{table}[h]
\begin{center}
\resizebox{\linewidth}{!}{%
\begin{tabular}{l c c c}
 \toprule
  Operation &  SigGPDE (ours) & GPSig-IT & GPSig-IS \\
    \midrule
    $C_{\mathbf{u}\mathbf{u}}$ & $\mathcal{O}(1)$ & $\mathcal{O}(n^2M^2d)$ & $\mathcal{O}((n+d)M^2\tilde{\ell}^2)$\\
    $C_{\mathbf{f}\mathbf{u}}$ & $\mathcal{O}(\tilde{N}M\ell)$&  $\mathcal{O}(n^2\tilde{N}M\ell d)$ & $\mathcal{O}((n+d)\tilde{N}M\tilde{\ell}\ell)$\\
    $\mathrm{diag}(C_{\mathbf{ff}})$ & $\mathcal{O}(d\tilde{N}\ell^2)$ &  $\mathcal{O}((n+d)\tilde{N}\ell^2)$ & $\mathcal{O}((n+d)\tilde{N}\ell^2)$ \\
    Lin. Alg. &$\mathcal{O}(\tilde{N}M^2)$  &  $\mathcal{O}(\tilde{N}M^2+M^3)$ & $\mathcal{O}(\tilde{N}M^2+M^3)$\\
    \bottomrule
\end{tabular}%
}
\caption{Comparison of time complexities. $M$ is the number of inducing variables, $\tilde{N}$ the batch size, $d$ the number of channels in the time series, $\ell$ the length of the sequences, $n$  the truncation level (for GPSig-IT and GPSig-IS) and $\tilde{\ell}$ the length of the inducing sequences.
}
\label{table:time}
\end{center}
\vspace{-10pt}
\end{table}

\begin{table}[h]
\begin{center}
\resizebox{\linewidth}{!}{%
\begin{tabular}{l c c c}
 \toprule
  Operation &  SigGPDE (ours) & GPSig-IT & GPSig-IS \\
    \midrule
    $C_{\mathbf{u}\mathbf{u}}$ & N/A & $\mathcal{O}(n^2M^2)$ & $\mathcal{O}(M^2\tilde{\ell}^2)$\\
    $C_{\mathbf{f}\mathbf{u}}$ & $\mathcal{O}(\tilde{N}M\ell)$& $\mathcal{O}(n^2\tilde{N}M\ell)$  &$\mathcal{O}(\tilde{N}M\ell\tilde{\ell})$ \\
    $\mathrm{diag}(C_{\mathbf{ff}})$ &  $\mathcal{O}(\tilde{N}\ell^2)$& $\mathcal{O}(\tilde{N}\ell^2)$  &  $\mathcal{O}(\tilde{N}\ell^2)$\\
    \midrule
    $\mathbf{z}$ & N/A & $\mathcal{O}(n^2Md)$ & $\mathcal{O}(M\tilde{\ell}d)$\\
    \bottomrule 
\end{tabular}%
}
\caption{Comparison of space complexities, separated by algorithm to compute each covariance matrix. The last line accounts for the storage of the inducing tensors and inducing sequences in GPSig-IT and GPSig-IS.
}
\label{table:space}
\end{center}
\vspace{-10pt}
\end{table}

    \subsection{Computing the signature kernel and its gradients}\label{ssec:computing_sig_kernel}
    
    Recall that the signature kernel solves the following PDE,
\begin{align}
    \frac{\partial^2 U}{\partial s \partial t} = (\dot{X}_s^T\dot{Y}_t)U && U(0,\cdot)=1,~U(\cdot,0)=1
\end{align}
therefore each kernel evaluation amounts to a call to a PDE solver. Using a straightforward implementation of a finite-difference PDE solver which consists in applying an update of the form 
\begin{align*}
    U(s_i,t_j)=g(U(s_{i-1},t_{j-1}),U(s_{i},t_{j-1}),U(s_{i-1},t_{j})),
\end{align*}
in row or column order, the time complexity for $N$ kernel evaluations for time series with $d$ channels of length $\ell$ is $\mathcal{O}(dN\ell^2)$. Indeed there is no data dependencies between each of the $N$ kernel evaluations, hence we can solve each PDE in parallel. But, this does not reduce the quadratic complexity with respect to the length $\ell$.  However, it is possible to parallelize the PDE solver by observing that instead of solving the PDE in row or column order, we can update the antidiagonals of the solution grid. As illustrated on \Cref{fig:cuda}, each cell on an antidiagonal can be updated with in parallel as there is no data dependency between them. Therefore, we propose a CUDA implementation where $N$ collections of $2\ell-1$ threads (the number of cells on the biggest antidiagonal) running in parallel can simultaneously update an antidiagonal of the solution grids. 

To compute the gradients, we use the result from \Cref{thm:kernelvarparams}. During the forward pass we solve the PDEs defined by the input time series using the CUDA operator described above. For the backward pass, we first solve the PDEs with the input time series reversed in time, by calling the same CUDA operator. Second, we compute the gradients using simple vectorized TensorFlow operations. 

\begin{figure}
    \centering
    \includegraphics[scale=1.2]{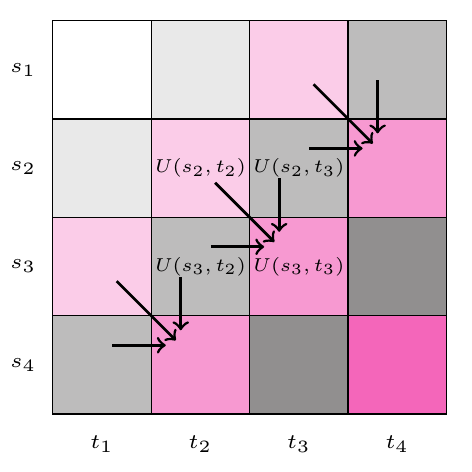}
    \caption{Parallelization of the finite-difference scheme. Each cell on an antidiagonal can be computed in parallel, provided the previous antidiagonals have been computed.}
    \label{fig:cuda}
\end{figure}
    


\end{document}